\newtheorem{thm}{Theorem}
\newtheorem{theorem}[thm]{Theorem}
\newtheorem{definition}[thm]{Definition}
\newtheorem{proposition}[thm]{Proposition}
\newtheorem{lemma}[thm]{Lemma}
\newtheorem{example}[thm]{Example}
\newdefinition{remark}{Remark}
\newproof{proof}{Proof}
\begin{document}

\begin{frontmatter}
\title{Condition for neighborhoods in covering based rough sets to form a partition}         
\author{Hua Yao}
\author{William Zhu\corref{cor1}}
\ead{williamfengzhu@gmail.com}
\cortext[cor1]{Corresponding author}

\address{Lab of Granular Computing,\\
Zhangzhou Normal University, Zhangzhou, China 363000}

\begin{abstract}
Neighborhood is an important concept in covering based rough sets. That under what condition neighborhoods form a partition is a meaningful issue induced by this concept. Many scholars have paid attention to this issue and presented some necessary and sufficient conditions. However, there exists one common trait among these conditions, that is they are established on the basis of all neighborhoods have been obtained. In this paper, we provide a necessary and sufficient condition directly based on the covering itself. First, we investigate the influence of that there are reducible elements in the covering on neighborhoods. Second, we propose the definition of uniform block and obtain a sufficient condition from it. Third, we propose the definitions of repeat degree and excluded number. By means of the two concepts, we obtain a necessary and sufficient condition for neighborhoods to form a partition. In a word, we have gained a deeper and more direct understanding of the essence over that neighborhoods form a partition.
\end{abstract}

\begin{keyword}
Neighborhood; Reducible element; Membership repeat degree; Excluded number.
\end{keyword}

\end{frontmatter}
\section{Introduction}
Rough set theory, proposed by Pawlak~\cite{Pawlak82Rough,Pawlak91Rough}, is an extension of set theory for the
study of intelligent systems characterized by insufficient and incomplete information. In theory, rough sets have been connected with matroids~\cite{TangSheZhu12matroidal,WangZhuZhuMin12matroidalstructure}, lattices~\cite{Dai05Logic,EstajiHooshmandaslDavvaz12Roughappliedtolattice,Liu08Generalized,WangZhu11Quantitative}, hyperstructure theory~\cite{YamakKazanciDavvaz11Softhyperstructure},
topology~\cite{Kondo05OnTheStructure,LashinKozaeKhadraMedhat05Rough,Zhu07Topological}, fuzzy sets~\cite{KazanciYamakDavvaz08TheLower,WuLeungMi05OnCharacterizations}, and so on. Rough set theory is built on an equivalence relation, or to say, on a partition. But equivalence relation or partition is still restrictive for many applications. To address
this issue, several meaningful extensions to equivalence relation
have been proposed. Among them, Zakowski has used coverings
of a universe for establishing the covering based rough set theory~\cite{Zakowski83Approximations}. Many scholars have done deep researches on this theory~\cite{BonikowskiBryniarskiWybraniecSkardowska98Extensions,Bryniarski89ACalculus,ZhuWang03Reduction}, and some basic results have been presented.

Neighborhood is an important concept in covering based rough set theory. Many scholars have studied it from different perspectives. Lin augmented the relational database with neighborhood~\cite{Lin88Neighborhoodsystems}. Yao presented a framework for the formulation, interpretation, and comparison of neighborhood systems and rough set approximations~\cite{Yao98Relational}. By means of consistent function based on the concept of neighborhood, Wang et al.~\cite{WangChenSunHu12Communication} dealt with information systems through covering based rough sets. Furthermore, the concept of neighborhood itself has produced lots of meaningful issues as well, and under what condition neighborhoods form a partition is one of them. Many scholars have focused on this issue and conducted some researches on it~\cite{FanHuXiaoZhang12Study,QinGaoPei07OnCovering,YunGeBai11Axiomatization}. Different scholars provided different sufficient and necessary conditions respectively. However, there is a common trait among these necessary and sufficient conditions, that is the neighborhoods had been calculated out before the necessary and sufficient condition was presented. For example, Yun et al.~\cite{YunGeBai11Axiomatization} studied the conditions for neighborhoods to form a partition from the viewpoint of operators, while the operators were defined by all neighborhoods. If all the neighborhoods have been calculated out, then whether or not the neighborhoods form a partition is already clear. So it is necessary to seek condition for neighborhoods to form a partition directly based on the covering itself.

In this paper, we provide a necessary and sufficient condition directly based on the covering itself. First, we investigate the influence of that there are reducible elements in the covering on neighborhoods. We prove that the reducible elements in the covering have no influence on the neighborhoods induced by the covering. Second, we propose the definition of uniform block and obtain a sufficient condition from it. We also give a counter-example to prove the condition is not necessary. Third, we propose the definitions of repeat degree and excluded number, and obtain some properties of them. By means of the two concepts and their properties, we obtain a necessary and sufficient condition for neighborhoods to form a partition. This necessary and sufficient condition for neighborhoods to form a partition does not involve in any lower or upper approximations, but the covering itself.

The remainder of this paper is organized as follows. In Section~\ref{S:Basic definitions}, we review the relevant concepts and introduce some existing results. In Section~\ref{S:Two sufficient conditions}, we give two sufficient conditions for neighborhoods to form a partition. In Section~\ref{S:The main theorem in this paper}, we present a sufficient and necessary condition. Section~\ref{S:Conclusions} concludes this paper and points out further works.

\section{Preliminaries}
\label{S:Basic definitions}
We introduce the definitions of covering and partition at first.

\begin{definition}(Covering)
Let $U$ be a universe of discourse and $\mathbf{C}$ a family of subsets of $U$. If $\emptyset\notin\mathbf{C}$, and $\cup \mathbf{C}=U$, then $\mathbf{C}$ is called a covering of $U$. Every element of $\mathbf{C}$ is called a covering block.
\end{definition}

In the following discussion, unless stated to the contrary, the universe of discourse $U$ is considered to be
finite and nonempty.

\begin{definition}(Partition)
Let $U$ be a universe and $\mathbf{P}$ a family of subsets of $U$. If $\emptyset\notin\mathbf{P}$, and $\cup \mathbf{P}=U$, and for any $K,L\in\mathbf{P}$, $K\cap L=\emptyset$, then $\mathbf{P}$ is called a partition of $U$. Every element of $\mathbf{P}$ is called a partition block.
\end{definition}

It is clear that a partition of $U$ is certainly a covering of $U$, so the concept of covering is an extension of the concept of partition.

In the following, we introduce the definitions of neighborhood and neighborhoods, the two main concepts which will be discussed in this paper.

\begin{definition}(Neighborhood~\cite{Lin88Neighborhoodsystems})
Let $\mathbf{C}$ be a covering of $U$. For any $x\in U$, $N(x)=\cap\{K\in\mathbf{C}|x\in K\}$ is called the neighborhood of $x$.
\end{definition}

In the following proposition, we introduce relationships between the neighborhoods of any two elements of a universe.

\begin{proposition}~\cite{WangChenSunHu12Communication}
Let $\mathbf{C}$ be a covering of $U$. For any $x,y\in U$, if $y\in N(x)$, then $N(y)\subseteq N(x)$. So if $y\in N(x)$ and $x\in N(y)$, then $N(x)=N(y)$.
\end{proposition}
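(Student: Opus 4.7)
The plan is to unfold the definition of neighborhood as an intersection of covering blocks, and then argue on the level of indexing families rather than elements. Let $\mathcal{F}_x = \{K \in \mathbf{C} \mid x \in K\}$ and $\mathcal{F}_y = \{K \in \mathbf{C} \mid y \in K\}$, so that $N(x) = \cap \mathcal{F}_x$ and $N(y) = \cap \mathcal{F}_y$ by definition.

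The key observation is that $y \in N(x)$ means $y$ belongs to \emph{every} covering block that contains $x$. First I would translate this into the statement $\mathcal{F}_x \subseteq \mathcal{F}_y$: for an arbitrary $K \in \mathcal{F}_x$, since $y \in N(x) \subseteq K$, we get $y \in K$, so $K \in \mathcal{F}_y$. From this I would conclude $N(y) = \cap \mathcal{F}_y \subseteq \cap \mathcal{F}_x = N(x)$, using the elementary fact that enlarging an indexing family of sets can only shrink the intersection. The second assertion is then immediate by symmetry: assuming also $x \in N(y)$ gives $N(x) \subseteq N(y)$ by the same argument with $x$ and $y$ interchanged, and combining the two inclusions yields $N(x) = N(y)$.

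I do not anticipate any real obstacle here, since everything reduces to unwinding the intersection definition. The only point that requires a moment of care is the inclusion-reversing passage from families of blocks to their intersections; I would state this explicitly to keep the logic transparent rather than rely on the reader to supply it.
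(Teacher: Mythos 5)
Your proof is correct; the paper states this proposition without proof (citing Wang et al.), and your argument via the inclusion $\{K\in\mathbf{C}\mid x\in K\}\subseteq\{K\in\mathbf{C}\mid y\in K\}$ and the inclusion-reversing property of intersections is exactly the standard justification. No gaps.
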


\begin{definition}~\cite{WangChenSunHu12Communication}
Let $\mathbf{C}$ be a covering of $U$. $Cov(\mathbf{C})=\{N(x)|x\in U\}$ is called the neighborhoods induced by $\mathbf{C}$.
\end{definition}

By the definition of $Cov(\mathbf{C})$, we see that $Cov(\mathbf{C})$ is still a covering of universe $U$. Papers~\cite{FanHuXiaoZhang12Study,QinGaoPei07OnCovering,YunGeBai11Axiomatization} provided some necessary and sufficient conditions for $Cov(\mathbf{C})$ to form a partition. In the following, we introduce the definition of covering approximation space and three conditions for $Cov(\mathbf{C})$ to form a partition.

\begin{definition}(Covering approximation space~\cite{ZhuWang03Reduction})
Let $U$ be a universe and $\mathbf{C}$ a covering of $U$. The ordered
pair $(U,\mathbf{C})$ is called a covering approximation space.
\end{definition}

\begin{proposition}~\cite{QinGaoPei07OnCovering}
Let $(U,\mathbf{C})$ be a covering approximation space. Then $Cov(\mathbf{C})$ forms a partition of $U$ if and only if for any $X\subseteq U$, $\underline{\mathbf{C}_{4}}(X)=\underline{\mathbf{C}_{2}}(X)$, where $\underline{\mathbf{C}_{4}}(X)=\{x\in U|\forall u(x\in N(u)\rightarrow N(u)\subseteq X)\}$, $\underline{\mathbf{C}_{2}}(X)=\{x\in U|N(x)\subseteq X\}$.

\end{proposition}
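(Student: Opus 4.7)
The plan is to prove both directions using two standard observations: (i) $x \in N(x)$ always, since $x$ lies in every covering block $K$ that appears in the intersection defining $N(x)$; and (ii) the preceding proposition, which says $y \in N(x) \Rightarrow N(y) \subseteq N(x)$. I would also use the fact that $Cov(\mathbf{C})$ is a partition if and only if any two of its members are either equal or disjoint (since $Cov(\mathbf{C})$ is already a covering, so nonemptiness and union $U$ are automatic).

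For the forward direction ($\Rightarrow$), the inclusion $\underline{\mathbf{C}_{4}}(X) \subseteq \underline{\mathbf{C}_{2}}(X)$ is free: if $x \in \underline{\mathbf{C}_{4}}(X)$, then instantiating $u := x$ (which is legal because $x \in N(x)$) yields $N(x) \subseteq X$. For the reverse inclusion I would take $x \in \underline{\mathbf{C}_{2}}(X)$, i.e.\ $N(x) \subseteq X$, and any $u$ with $x \in N(u)$. Then $x \in N(x) \cap N(u)$, so these two neighborhoods are not disjoint; the partition assumption forces $N(u) = N(x) \subseteq X$, giving $x \in \underline{\mathbf{C}_{4}}(X)$.

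For the converse ($\Leftarrow$), I would argue by contradiction: assume $Cov(\mathbf{C})$ is not a partition, so there exist $a,b \in U$ with $N(a) \cap N(b) \neq \emptyset$ and $N(a) \neq N(b)$. Pick $z \in N(a) \cap N(b)$. By the earlier proposition, $N(z) \subseteq N(a)$ and $N(z) \subseteq N(b)$. If $N(z) = N(a)$ and $N(z) = N(b)$ both held, then $N(a) = N(b)$, contradicting our choice. So at least one proper inclusion holds; without loss of generality say $N(z) \subsetneq N(a)$ (otherwise swap the roles of $a$ and $b$). Now set $X := N(z)$. Then $z \in \underline{\mathbf{C}_{2}}(X)$ trivially, but taking $u := a$ we have $z \in N(a)$ while $N(a) \not\subseteq N(z) = X$, so $z \notin \underline{\mathbf{C}_{4}}(X)$, contradicting $\underline{\mathbf{C}_{4}}(X) = \underline{\mathbf{C}_{2}}(X)$.

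The main obstacle, though a minor one, is the converse direction: one has to choose the witnessing set $X$ carefully. The temptation would be to try $X = N(a)$ or $X = N(b)$ directly, but these need not separate $\underline{\mathbf{C}_{2}}$ from $\underline{\mathbf{C}_{4}}$. The right move is to descend to $N(z)$ where $z$ is a common element, because the proposition $y \in N(x) \Rightarrow N(y) \subseteq N(x)$ then forces at least one strict containment, which is exactly what produces the separation. All remaining steps are direct unpacking of the definitions.
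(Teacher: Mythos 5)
Your proof is correct. Note, however, that the paper does not prove this proposition at all --- it is imported from Qin, Gao and Pei's paper as a known result --- so there is no in-paper argument to compare against; your argument stands on its own, and every step (the instantiation $u:=x$ via $x\in N(x)$ for $\underline{\mathbf{C}_{4}}(X)\subseteq\underline{\mathbf{C}_{2}}(X)$, the disjoint-or-equal property of partition blocks for the reverse inclusion, and the choice $X:=N(z)$ with $z$ a common point forcing a strict containment $N(z)\subsetneq N(a)$ in the converse) checks out. A marginally shorter converse is available: applying the hypothesis to $X:=N(x)$ shows that $x\in N(u)$ implies $N(u)\subseteq N(x)$, which together with $N(x)\subseteq N(u)$ gives $N(u)=N(x)$ and hence the partition property directly, but this is only a cosmetic difference from what you wrote.
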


\begin{proposition}~\cite{YunGeBai11Axiomatization}
Let $(U,\mathbf{C})$ be a covering approximation space. Then $Cov(\mathbf{C})$ forms a partition of $U$ if and only if for any $X\subseteq U$, $\overline{\mathbf{C}_{3}}(\underline{\mathbf{C}_{3}}(X))=\underline{\mathbf{C}_{3}}(X)$, where $\underline{\mathbf{C}_{3}}(X)=\{x\in U|N(x)\subseteq X\}$, $\overline{\mathbf{C}_{3}}(X)=\{x\in U|N(x)\cap X\neq\emptyset\}$.
\end{proposition}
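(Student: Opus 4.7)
The plan is to translate the stated operator equality into a concrete statement about neighborhoods and then close the equivalence using the earlier Proposition ($y\in N(x)\Rightarrow N(y)\subseteq N(x)$).

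First, because $x\in N(x)$ for every $x$, any $x\in\underline{\mathbf{C}_{3}}(X)$ witnesses itself in $N(x)\cap\underline{\mathbf{C}_{3}}(X)$, so the inclusion $\underline{\mathbf{C}_{3}}(X)\subseteq\overline{\mathbf{C}_{3}}(\underline{\mathbf{C}_{3}}(X))$ holds automatically. The real content of the displayed equality is the reverse inclusion, which unwinds to: whenever some $y\in N(x)$ satisfies $N(y)\subseteq X$, then $N(x)\subseteq X$. I would also record the working characterization that $Cov(\mathbf{C})$ is a partition iff $y\in N(x)$ forces $N(x)=N(y)$; the forward direction is immediate from $y\in N(x)\cap N(y)$, while the backward direction follows by applying the property to any element of $N(x)\cap N(y)$ to obtain $N(x)=N(z)=N(y)$.

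For the $(\Rightarrow)$ direction, assume $Cov(\mathbf{C})$ is a partition and take $x\in\overline{\mathbf{C}_{3}}(\underline{\mathbf{C}_{3}}(X))$; some $y\in N(x)$ satisfies $N(y)\subseteq X$, and the working characterization yields $N(x)=N(y)\subseteq X$, so $x\in\underline{\mathbf{C}_{3}}(X)$. For the $(\Leftarrow)$ direction, the crucial move is to instantiate the universal hypothesis at the specific test set $X=N(y)$. Given any $y\in N(x)$, we have $y\in\underline{\mathbf{C}_{3}}(N(y))$ trivially, so $x\in\overline{\mathbf{C}_{3}}(\underline{\mathbf{C}_{3}}(N(y)))=\underline{\mathbf{C}_{3}}(N(y))$ by hypothesis, i.e., $N(x)\subseteq N(y)$. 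Combining with $N(y)\subseteq N(x)$ from the earlier Proposition yields $N(x)=N(y)$, and the working characterization delivers the partition property.

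I expect the main obstacle to be spotting the test set $X=N(y)$ in the backward direction: the universal quantifier over all $X\subseteq U$ must be defeated by a single well-chosen instance that pins down equality of neighborhoods. Everything else is routine definition-chasing through $\underline{\mathbf{C}_{3}}$ and $\overline{\mathbf{C}_{3}}$ combined with the monotone neighborhood inclusion from the earlier Proposition.
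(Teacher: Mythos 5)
Your proof is correct. Note that the paper only cites this proposition from Yun, Ge and Bai without supplying a proof, so there is nothing to compare against; your argument --- reducing the operator identity to the characterization ``$y\in N(x)$ implies $N(x)=N(y)$'' and then defeating the universal quantifier in the backward direction with the single test set $X=N(y)$ --- is a complete and self-contained justification of the cited result, relying only on the monotonicity property $y\in N(x)\Rightarrow N(y)\subseteq N(x)$ already stated in the paper.
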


\begin{proposition}~\cite{FanHuXiaoZhang12Study}
Let $(U,\mathbf{C})$ be a covering approximation space.
Then $Cov(\mathbf{C})$ forms a partition of $U$ if and only if for any x,
$\overline{\mathbf{C}}(\{x\})=N(x)$, where $\overline{\mathbf{C}}(X)=\{x\in U|\forall K\in\mathbf{C}(x\in K\rightarrow K\cap X\neq\emptyset)\}$.
\end{proposition}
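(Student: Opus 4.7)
The plan is to reduce both directions to a cleaner reformulation of $\overline{\mathbf{C}}(\{x\})$. First I would observe that the condition ``$\forall K\in\mathbf{C}(y\in K\rightarrow K\cap\{x\}\neq\emptyset)$'' is simply ``every covering block containing $y$ also contains $x$,'' which is equivalent to $x\in\cap\{K\in\mathbf{C}:y\in K\}=N(y)$. Hence
\[
\overline{\mathbf{C}}(\{x\})=\{y\in U:x\in N(y)\}.
\]
The claimed identity $\overline{\mathbf{C}}(\{x\})=N(x)$ therefore says exactly that the relation $R$ defined by $xRy\iff y\in N(x)$ is symmetric, i.e.\ $y\in N(x)\iff x\in N(y)$.

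For the ``only if'' direction, I would assume $Cov(\mathbf{C})$ is a partition and take any $y\in\overline{\mathbf{C}}(\{x\})$, so $x\in N(y)$. Since always $x\in N(x)$, the two blocks $N(x)$ and $N(y)$ of the partition $Cov(\mathbf{C})$ share the element $x$, forcing $N(x)=N(y)$; thus $y\in N(y)=N(x)$. Conversely, if $y\in N(x)$ then $y\in N(x)\cap N(y)$, so again $N(x)=N(y)$, giving $x\in N(y)$ and hence $y\in\overline{\mathbf{C}}(\{x\})$.

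For the ``if'' direction, I would use the reformulation to package the hypothesis as symmetry of $R$. The relation $R$ is reflexive because $x\in N(x)$, and it is transitive by the previously stated Proposition: if $y\in N(x)$ and $z\in N(y)$, then $N(y)\subseteq N(x)$, hence $z\in N(x)$. Together with the assumed symmetry, $R$ is an equivalence relation. The $R$-equivalence class of $x$ is $\{y:y\in N(x)\}=N(x)$, so the set of equivalence classes coincides with $Cov(\mathbf{C})$, which is therefore a partition of $U$.

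The only non-routine step is the opening reformulation of $\overline{\mathbf{C}}(\{x\})$ as $\{y:x\in N(y)\}$; once this is in hand, both implications are immediate applications of the earlier Proposition on neighborhood containment. No subtle counterexamples or delicate case analyses are expected.
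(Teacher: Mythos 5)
Your argument is correct. Note that the paper does not actually prove this proposition---it is quoted from Fan et al.\ with a citation and no proof---so there is no in-paper argument to compare against. Your reformulation $\overline{\mathbf{C}}(\{x\})=\{y\in U: x\in N(y)\}$ is right, and it correctly turns the hypothesis into symmetry of the relation $y\in N(x)$; the ``only if'' direction via two partition blocks sharing a point, and the ``if'' direction via reflexivity, the transitivity supplied by the earlier proposition $y\in N(x)\Rightarrow N(y)\subseteq N(x)$, and the identification of $Cov(\mathbf{C})$ with the set of equivalence classes, are all sound and self-contained. This is essentially the standard argument one would expect, and it fits cleanly with the machinery already stated in Section~2.
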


From the above three propositions, we can see that there are some special properties on covering approximation operators when $Cov(\mathbf{C})$ forms a partition. There are some more in-depth discussions in Paper~\cite{FanHuXiaoZhang12Study,QinGaoPei07OnCovering,YunGeBai11Axiomatization} regarding this issue. However, we can see that every $N(x)$ was used directly or indirectly in the description of the necessary and sufficient conditions. In fact, if all the $N(x)$ have been calculated out, then whether or not the neighborhoods form a partition is already clear. In the remainder of this paper, we will present a necessary and sufficient condition directly based on the covering itself.

\section{Two sufficient conditions}
\label{S:Two sufficient conditions}
In this section, we present two sufficient conditions for neighborhoods to form a partition. The concept of reducible element is needed for the description of one sufficient condition.

\begin{definition}(Reducible element~\cite{ZhuWang03Reduction})
Let $\mathbf{C}$ be a covering of a universe $U$ and $K\in \mathbf{C}$. If $K$ is a union of some blocks in $\mathbf{C}-\{K\}$, we say $K$ is a reducible element of $\mathbf{C}$, otherwise $K$ is an irreducible element of $\mathbf{C}$.
\end{definition}

\begin{proposition}~\cite{ZhuWang03Reduction}
\label{P:C-Kisstillacovering}
Let $\mathbf{C}$ be a covering of a universe $U$. If $K$ is a reducible element of $\mathbf{C}$, $\mathbf{C}-\{K\}$ is still a covering of $U$.
\end{proposition}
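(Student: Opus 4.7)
The plan is to verify directly that $\mathbf{C}-\{K\}$ satisfies the two defining clauses of the covering definition, namely that it does not contain the empty set and that its union is all of $U$. The first clause is immediate: since $\mathbf{C}$ is a covering we already have $\emptyset\notin\mathbf{C}$, and removing one element can only shrink the family, so $\emptyset\notin\mathbf{C}-\{K\}$.

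For the second clause, the key is to exploit the hypothesis that $K$ is reducible. By the definition of reducibility, there exists a subfamily $\mathbf{C}'\subseteq\mathbf{C}-\{K\}$ such that $K=\cup\mathbf{C}'$. First I would observe that $K=\cup\mathbf{C}'\subseteq\cup(\mathbf{C}-\{K\})$, because every member of $\mathbf{C}'$ sits inside $\mathbf{C}-\{K\}$. Then, starting from $U=\cup\mathbf{C}=K\cup\cup(\mathbf{C}-\{K\})$ and substituting $K\subseteq\cup(\mathbf{C}-\{K\})$, I get $U\subseteq\cup(\mathbf{C}-\{K\})$. The reverse inclusion $\cup(\mathbf{C}-\{K\})\subseteq U$ is trivial because every block of $\mathbf{C}$ lies in $U$. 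Combining the two gives $\cup(\mathbf{C}-\{K\})=U$, which completes the verification.

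There is no genuine obstacle here; the proposition is essentially a direct unpacking of the definitions of covering and of reducible element. The only thing to be careful about is the pedantic point that $\mathbf{C}'$ may be empty, in which case $K=\cup\mathbf{C}'=\emptyset$; but this is excluded since $K\in\mathbf{C}$ and $\emptyset\notin\mathbf{C}$, so $\mathbf{C}'$ is automatically nonempty and the argument above goes through unchanged.
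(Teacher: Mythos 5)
Your argument is correct: the paper itself gives no proof of this proposition (it is quoted from Zhu and Wang's reduction paper), and your direct verification of the two clauses of the covering definition, using $K=\cup\mathbf{C}'\subseteq\cup(\mathbf{C}-\{K\})$ to show no points of $U$ are lost, is exactly the standard argument. The remark about $\mathbf{C}'$ being nonempty because $\emptyset\notin\mathbf{C}$ is a reasonable precaution and is handled correctly.
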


\begin{proposition}~\cite{ZhuWang03Reduction}
\label{P:it is a reducible element of}
Let $\mathbf{C}$ be a covering of a universe $U$, $K\in \mathbf{C}$, $K$ is a reducible element of $\mathbf{C}$, and $K_{1}\in \mathbf{C}-\{K\}$, then $K_{1}$ is a reducible element of $\mathbf{C}$ if and only if it is a reducible element of $\mathbf{C}-\{K\}$.
\end{proposition}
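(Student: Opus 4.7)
The plan is to prove the biconditional in two directions, isolating the nontrivial substitution argument to the forward direction. For the backward direction, if $K_1$ is reducible in $\mathbf{C}-\{K\}$, then by definition there is a subfamily $\mathcal{T}\subseteq (\mathbf{C}-\{K\})-\{K_1\}$ with $K_1=\bigcup\mathcal{T}$; since $\mathcal{T}\subseteq \mathbf{C}-\{K_1\}$, the same $\mathcal{T}$ witnesses reducibility of $K_1$ inside $\mathbf{C}$. This direction does not use the assumption that $K$ is reducible, only that $K\neq K_1$.

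For the forward direction, assume $K_1=\bigcup\mathcal{T}$ for some $\mathcal{T}\subseteq\mathbf{C}-\{K_1\}$. The idea is to split into cases on whether $K\in\mathcal{T}$. If $K\notin\mathcal{T}$, then already $\mathcal{T}\subseteq\mathbf{C}-\{K,K_1\}$ and there is nothing more to do. If $K\in\mathcal{T}$, I would invoke the reducibility of $K$: pick $\mathcal{S}\subseteq\mathbf{C}-\{K\}$ with $K=\bigcup\mathcal{S}$, and substitute, forming $\mathcal{T}':=(\mathcal{T}-\{K\})\cup\mathcal{S}$. A direct union computation gives $\bigcup\mathcal{T}'=\bigcup(\mathcal{T}-\{K\})\cup K=\bigcup\mathcal{T}=K_1$, and by construction $K\notin\mathcal{T}'$.

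The one subtle point, and what I expect to be the only real obstacle, is showing that $\mathcal{T}'$ also omits $K_1$; otherwise $\mathcal{T}'$ would fail to witness reducibility of $K_1$ in $\mathbf{C}-\{K\}$. Since $K_1\notin\mathcal{T}-\{K\}$ already, this reduces to verifying $K_1\notin\mathcal{S}$. Suppose for contradiction that $K_1\in\mathcal{S}$. Then $K_1\subseteq\bigcup\mathcal{S}=K$. On the other hand, $K\in\mathcal{T}$ forces $K\subseteq\bigcup\mathcal{T}=K_1$. Together these give $K=K_1$, contradicting $K_1\in\mathbf{C}-\{K\}$. Hence $K_1\notin\mathcal{S}$, so $\mathcal{T}'\subseteq\mathbf{C}-\{K,K_1\}$, and $K_1$ is reducible in $\mathbf{C}-\{K\}$, completing the proof.
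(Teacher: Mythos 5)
Your proof is correct. Note that the paper itself gives no proof of this proposition --- it is quoted from Zhu and Wang's earlier work \cite{ZhuWang03Reduction} --- so there is nothing internal to compare against; your substitution argument (replace $K$ in the witnessing family by a family whose union is $K$) is the standard one, and you correctly identify and close the one genuine subtlety, namely that $K_{1}\notin\mathcal{S}$, via the mutual-inclusion argument forcing $K=K_{1}$.
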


Proposition~\ref{P:C-Kisstillacovering} guarantees that after deleting a reducible element in a covering,
it is still a covering, whereas Proposition~\ref{P:it is a reducible element of} shows that deleting a reducible element
in a covering will not generate any new reducible elements or make other
originally reducible elements become irreducible elements of the new covering. So, we can compute the reduct of a covering of a universe $U$ by deleting all
reducible elements in the same time, or by deleting one reducible element in a
step.
\begin{definition}(Reduct~\cite{ZhuWang03Reduction})
Let $\mathbf{C}$ be a covering of a universe $U$ and $D$ a subset of $\mathbf{C}$. If $\mathbf{C}-D$ is the set of all reducible elements of $\mathbf{C}$, then $D$ is called the reduct of $\mathbf{C}$, and is denoted  as $reduct(\mathbf{C})$.
\end{definition}

The following proposition indicates that deleting the reducible elements from the covering has no influence on the neighborhoods.

\begin{proposition}
Let $\mathbf{C}$ be a covering of a universe $U$, then
\begin{center}
$Cov(\mathbf{C})=Cov(reduct(\mathbf{C}))$.
\end{center}
\end{proposition}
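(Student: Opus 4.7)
The plan is to reduce to the single-removal case and then iterate, using Proposition~\ref{P:it is a reducible element of} to license the induction. More precisely, I would first establish the one-step lemma: if $K\in\mathbf{C}$ is reducible, then for every $x\in U$ one has $N_{\mathbf{C}}(x)=N_{\mathbf{C}-\{K\}}(x)$, where $N_{\mathbf{C}}$ denotes the neighborhood computed in the covering $\mathbf{C}$. Once this lemma is in hand, the result follows by removing reducible elements one at a time until only $reduct(\mathbf{C})$ remains; Proposition~\ref{P:it is a reducible element of} guarantees that each intermediate family is a covering in which the remaining reducible elements stay reducible, so the process terminates at $reduct(\mathbf{C})$.

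For the one-step lemma I would split on whether $x\in K$. If $x\notin K$, then the families $\{K'\in\mathbf{C}\mid x\in K'\}$ and $\{K'\in\mathbf{C}-\{K\}\mid x\in K'\}$ are identical, so the two intersections coincide. If $x\in K$, then by reducibility we can write $K=\bigcup_{i\in I}K_i$ with each $K_i\in\mathbf{C}-\{K\}$, and in particular there exists some $j\in I$ with $x\in K_j$. This $K_j$ appears in the intersection defining $N_{\mathbf{C}-\{K\}}(x)$, so $N_{\mathbf{C}-\{K\}}(x)\subseteq K_j\subseteq K$. Consequently intersecting $N_{\mathbf{C}-\{K\}}(x)$ with the extra set $K$ does not shrink it, giving $N_{\mathbf{C}}(x)=N_{\mathbf{C}-\{K\}}(x)\cap K=N_{\mathbf{C}-\{K\}}(x)$.

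With the one-step lemma established, the induction is routine: order the reducible elements of $\mathbf{C}$ as $K_1,\dots,K_m$ and remove them successively, at each stage applying the one-step lemma to the current covering (which, by Proposition~\ref{P:it is a reducible element of}, still has $K_{i+1}$ reducible). After $m$ steps we reach $reduct(\mathbf{C})$, and the equality $N_{\mathbf{C}}(x)=N_{reduct(\mathbf{C})}(x)$ for all $x$ yields $Cov(\mathbf{C})=Cov(reduct(\mathbf{C}))$ by the definition of $Cov$.

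I do not anticipate a serious obstacle; the only delicate point is the case $x\in K$ in the one-step lemma, where one must notice that reducibility produces a block $K_j$ containing $x$ and sitting inside $K$, which is precisely what makes $K$ redundant in the defining intersection of $N(x)$. Everything else is bookkeeping.
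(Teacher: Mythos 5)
Your proposal is correct and follows essentially the same route as the paper: the paper's proof is an induction on the number of reducible elements whose inductive step is exactly your one-step lemma, including the same case split on $x\in K$ and the same key observation that reducibility yields a block of $\mathbf{C}-\{K\}$ containing $x$ and contained in $K$, so that intersecting with $K$ is redundant. The appeal to Proposition~\ref{P:it is a reducible element of} to keep the remaining elements reducible at each stage also matches the paper.
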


\begin{proof}

We prove this proposition using induction on $m(m\geq1)$, the amount of reducible elements.

Assume that the proposition is true for that the amount of reducible elements is less than $m$.

Assume that the amount of reducible elements is equal to $m$ and $K$ is a reducible element of $\mathbf{C}$. By Proposition~\ref{P:C-Kisstillacovering}, we have that $\mathbf{C}-\{K\}$ is still a covering of $U$, and there exists a set $L\subseteq\mathbf{C}-\{K\}$, such that $K=\cup L$. For any $x\in U$, we denote the neighborhood of $x$ induced by covering $\mathbf{C}$ as $N_{c}(x)$, denote the neighborhood of $x$ induced by covering $\mathbf{C}-\{K\}$ as $N_{c-\{K\}}(x)$.

For any $x\in U$, it follows that $x\notin K$ or $x\in K$. If $x\notin K$, $N_{c-\{K\}}(x)=N_{c}(x)$ holds obviously. If $x\in K$, by $K=\cup L$, we have that there exists $P\in L$, i.e. $P\subset K$ such that $x\in P$.

Let $\{A|A\in \mathbf{C}-\{K\}\wedge x\in A\}=W$. It is clear that $P\in W$. Therefore $N_{c-\{K\}}(x)=\cap W$, $N_{c}(x)=(\cap W)\cap K$. By $P\in W$, we have $\cap W\subseteq P\subset K$, thus $(\cap W)\cap K=\cap W$, then $N_{c-\{K\}}(x)=N_{c}(x)$. Taking into account the arbitrariness of $x$, we have that $Cov(\mathbf{C})=Cov(\mathbf{C}-\{K\})$.

By Proposition~\ref{P:it is a reducible element of}, we see that there are $m-1$ reducible elements in set $\mathbf{C}-\{K\}$. By the induction hypothesis, we have that $Cov(\mathbf{C}-\{K\})=Cov(reduct(\mathbf{C}-\{K\}))$. Again, by Proposition~\ref{P:it is a reducible element of}, we have that $reduct(\mathbf{C}-\{K\})=reduct(\mathbf{C})$. Integrating the results as above, we have $Cov(\mathbf{C})=Cov(reduct(\mathbf{C}))$.

This completes the proof.
\end{proof}

If the covering $\mathbf{C}$ of a universe $U$ is a partition, it is clear that $Cov(\mathbf{C})=\mathbf{C}$ is a partition. Therefore, we obtain the following theorem.

\begin{theorem}
\label{T:sufficient condition 1}
Let $\mathbf{C}$ be a covering of a universe $U$. If $reduct(\mathbf{C})$ is a partition, then $Cov(\mathbf{C})$ forms a partition.
\end{theorem}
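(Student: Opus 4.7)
The plan is to chain together two facts, both of which are already available in the text. The first is the preceding proposition, which states $Cov(\mathbf{C}) = Cov(reduct(\mathbf{C}))$; this reduces the problem to verifying that $Cov(reduct(\mathbf{C}))$ is a partition under the stronger assumption that $reduct(\mathbf{C})$ itself is one. The second is the observation the authors have just made informally in the paragraph preceding the theorem: for any partition $\mathbf{P}$ of $U$, one has $Cov(\mathbf{P}) = \mathbf{P}$.

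For completeness I would spell out the second fact. Given a partition $\mathbf{P}$ and any $x \in U$, there is exactly one block $K \in \mathbf{P}$ with $x \in K$, because the blocks are pairwise disjoint. Hence the intersection defining the neighborhood collapses: $N(x) = \cap\{K' \in \mathbf{P} \mid x \in K'\} = K$. As $x$ ranges over $U$ and every block of $\mathbf{P}$ contains at least one element (blocks are nonempty), the set $\{N(x) \mid x \in U\}$ is exactly $\mathbf{P}$. Thus $Cov(\mathbf{P}) = \mathbf{P}$.

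Applying this to $\mathbf{P} = reduct(\mathbf{C})$ gives $Cov(reduct(\mathbf{C})) = reduct(\mathbf{C})$, and combining with the preceding proposition yields $Cov(\mathbf{C}) = reduct(\mathbf{C})$, which is a partition by hypothesis. There is no genuine obstacle here; the previous proposition on reducible elements does all the substantive work, and the theorem is essentially a one-line corollary. The only point requiring any care is the explicit verification that $Cov$ acts as the identity on partitions, which follows immediately from disjointness of blocks.
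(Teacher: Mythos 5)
Your proposal is correct and follows exactly the route the paper intends: the preceding proposition gives $Cov(\mathbf{C})=Cov(reduct(\mathbf{C}))$, and the paper's remark before the theorem is precisely your second fact, that $Cov(\mathbf{P})=\mathbf{P}$ for a partition $\mathbf{P}$. Your explicit verification of that fact via the uniqueness of the block containing $x$ is a welcome bit of extra care but changes nothing substantive.
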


The following counter-example indicates that the condition is not necessary.

\begin{example}
Let $U=\{1,2,3,4\}$, $\mathbf{C}=\{K_{1},K_{2},K_{3},K_{4}\}$, where $K_{1}=\{1,2,3\}$, $K_{2}=\{1,2\}$, $K_{3}=\{3,4\}$, $K_{4}=\{4\}$, then $reduct(\mathbf{C})=\mathbf{C}$ is not a partition. But by $N(1)=N(2)=\{1,2\}$, $N(3)=\{3\}$, $N(4)=\{4\}$, we have that $Cov(\mathbf{C})=\{\{1,2\},\{3\},\{4\}\}$ is a partition.
\end{example}

Now, we give some new definitions and then give the other sufficient condition for neighborhoods to form a partition.

\begin{definition}(Membership repeat degree)
\label{D:Membership repeat degree}
Let $\mathbf{C}$ be a covering of a universe $U$. We define a function $\partial_{\mathbf{C}}:U\rightarrow N^{+}$, $\partial_{\mathbf{C}}(x)=|\{K\in\mathbf{C}|x\in K\}|$, and call $\partial_{\mathbf{C}}(x)$ the membership repeat degree of $x$ with respect to covering $\mathbf{C}$. When the covering is clear, we omit the lowercase $\mathbf{C}$ for the function.
\end{definition}

That an element $x$ of $U$ has the membership repeat degree of $\partial(x)$ means there are $\partial(x)$  blocks in covering  $\mathbf{C}$ that contain element $x$.

To illustrate the above definition, let us see an example.

\begin{example}
Let $U=\{1,2,3\}$, $\mathbf{C}=\{K_{1},K_{2}\}$, where $K_{1}=\{1,2\}$, $K_{2}=\{2,3\}$. Then $\{K\in\mathbf{C}|1\in K\}=\{K_{1}\}$, $\{K\in\mathbf{C}|2\in K\}=\{K_{1},K_{2}\}$, $\{K\in\mathbf{C}|3\in K\}=\{K_{2}\}$, thus $\partial(1)=|\{K_{1}\}|=1$, $\partial(2)=|\{K_{1},K_{2}\}|=2$, $\partial(3)=|\{K_{2}\}|=1$.
\end{example}

\begin{definition}(Uniform block)
Let $\mathbf{C}$ be a covering of a universe $U$. For any  $K\in\mathbf{C}$, $K$ is called a uniform block with respect to covering $\mathbf{C}$ if and only if all the elements belonging to $K$ have the same membership repeat degree.
\end{definition}

To illustrate the above definition, let us see an example.

\begin{example}
Let $U=\{1,2,3,4\}$, $\mathbf{C}=\{K_{1},K_{2},K_{3}\}$, where $K_{1}=\{1,2\}$, $K_{2}=\{2,3,4\}$, $K_{3}=\{3,4\}$. We have $\partial(1)=1$, $\partial(2)=\partial(3)=\partial(4)=2$, thus $K_{2}$ and $K_{3}$ are uniform blocks, but $K_{1}$ is not a uniform block.
\end{example}

By the definition of uniform block, we obtain the following theorem.

\begin{theorem}
\label{T:sufficient condition 2}
If all the blocks of covering $\mathbf{C}$ are uniform blocks, then $Cov(\mathbf{C})$ forms a partition.
\end{theorem}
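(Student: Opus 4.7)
The plan is to reduce the partition claim to the following core statement: for any $x,y\in U$, if $y\in N(x)$, then $N(x)=N(y)$. Together with the already-cited proposition that $y\in N(x)$ implies $N(y)\subseteq N(x)$, the real content is to show the reverse inclusion $N(x)\subseteq N(y)$, i.e., $x\in N(y)$.

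First I would introduce the shorthand $\mathcal{F}_z=\{K\in\mathbf{C}\mid z\in K\}$, so that $N(z)=\cap\mathcal{F}_z$ and $\partial(z)=|\mathcal{F}_z|$. From $y\in N(x)$ I would observe that every block containing $x$ also contains $y$, which gives the set-inclusion $\mathcal{F}_x\subseteq\mathcal{F}_y$ and therefore $\partial(x)\leq\partial(y)$. The goal then becomes upgrading this inclusion to an equality, which is equivalent to $\partial(x)=\partial(y)$ (since $\mathcal{F}_x,\mathcal{F}_y$ are finite and one contains the other). Once $\mathcal{F}_x=\mathcal{F}_y$, the neighborhoods coincide and the claim follows.

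The key step, and the only place the hypothesis is used, is to produce the equality of membership repeat degrees. Since $\mathbf{C}$ is a covering there is at least one $K\in\mathcal{F}_x$; this $K$ contains $x$, and it contains $y$ because $y\in N(x)\subseteq K$. By hypothesis $K$ is a uniform block, so all its elements share the same membership repeat degree; in particular $\partial(x)=\partial(y)$. Combined with $\mathcal{F}_x\subseteq\mathcal{F}_y$, this forces $\mathcal{F}_x=\mathcal{F}_y$, hence $N(x)=N(y)$. I do not expect any obstacle here beyond the routine observation that $y$ lies in every block of $\mathcal{F}_x$; the uniform-block hypothesis was essentially designed to make this step go through.

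Finally, to conclude that $Cov(\mathbf{C})$ is a partition, I would check the disjointness condition: suppose $N(x)\cap N(y)\neq\emptyset$ and pick $z$ in the intersection. Applying the core statement twice, first with the pair $(x,z)$ and then with $(y,z)$, yields $N(x)=N(z)=N(y)$. Since $Cov(\mathbf{C})$ is already known to be a covering of $U$ (every $x$ lies in its own $N(x)$, and $\emptyset\notin Cov(\mathbf{C})$ for the same reason), pairwise disjointness of distinct members completes the proof that $Cov(\mathbf{C})$ is a partition of $U$.
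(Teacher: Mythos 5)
Your proof is correct and rests on the same key observation as the paper's own argument: $y\in N(x)$ forces every block containing $x$ to also contain $y$ (so $\{K\in\mathbf{C}\mid x\in K\}\subseteq\{K\in\mathbf{C}\mid y\in K\}$), and a shared uniform block then forces $\partial(x)=\partial(y)$, which upgrades the inclusion to equality. The paper packages this as an indirect proof (deriving $\partial(x)>\partial(y)$ and exhibiting a non-uniform common block), whereas you argue directly via the intermediate claim $y\in N(x)\Rightarrow N(x)=N(y)$; the mathematical content is essentially identical.
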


\begin{proof}

We use an indirect proof. Suppose $Cov(\mathbf{C})$ is not a partition, then there exists at least one $x\in U$, such that $|\{K\in Cov(\mathbf{C})|x\in K\}|>1$. Since it is clear that $x\in N(x)$, so we suppose there is another block $N(y)\in Cov(\mathbf{C})$, such that $x\in N(y)$, where $y\neq x$, and $y\notin N(x)$, for if it is not so, we will obtain $N(x)=N(y)$. By $x\in N(y)$, we have $\forall L((L\in\mathbf{C}\wedge y\in L)\rightarrow x\in L)$. By $y\notin N(x)$, we have $\exists K(K\in\mathbf{C}\wedge x\in K\wedge y\notin K)$.
Integrating the two results as above, we have $\partial(x)>\partial(y)$.
By $x\in N(y)$, we have $\exists M(M\in\mathbf{C}\wedge y\in M\wedge x\in M)$, thus we see $M$ is not a uniform block. This is a contradiction to the hypothesis.

This completes the proof.
\end{proof}

The following counter-example indicates that the condition is not necessary.

\begin{example}
Let $U=\{1,2,3,4\}$, $\mathbf{C}=\{K_{1},K_{2},K_{3},K_{4},K_{5}\}$, where $K_{1}=\{1,2,3\}$, $K_{2}=\{1,2\}$, $K_{3}=\{3,4\}$, $K_{4}=\{3\}$, $K_{5}=\{4\}$, then $\partial(3)=3$, $\partial(4)=2$, so $K_{3}$  is not a uniform block. But $N(1)=N(2)=\{1,2\}$, $N(3)=\{3\}$, $N(4)=\{4\}$, thus $Cov(\mathbf{C})=\{\{1,2\},\{3\},\{4\}\}$ is a partition.
\end{example}

The sufficient conditions in Theorem~\ref{T:sufficient condition 1} and in Theorem~\ref{T:sufficient condition 2} are independent from each other. To illustrate it, let us see the following two examples.

\begin{example}
Let $U=\{1,2,3\}$, $\mathbf{C}=\{\{1\},\{2\},\{3\},\{1,2\},\{1,3\}\}$, then $reduct(\mathbf{C})\\
=\{\{1\},\{2\},\{3\}\}$  is a partition. But $\partial(1)=3$, $\partial(2)=\partial(3)=2$, so both $\{1,2\}$ and $\{1,3\}$ are not uniform blocks.
\end{example}

\begin{example}
Let $U=\{1,2,3\}$, $\mathbf{C}=\{\{1,2\},\{1,3\},\{2,3\}\}$, then all of $\{1,2\}$, $\{1,3\}$, $\{2,3\}$ are uniform blocks. But $reduct(\mathbf{C})=\{\{1,2\},\{1,3\},\{2,3\}\}$ is not a partition.
\end{example}

\section{A sufficient and necessary condition }
\label{S:The main theorem in this paper}
In this section, we propose some new concepts. By means of them, we obtain a necessary and sufficient condition for neighborhoods to form a partition.

\begin{definition}(Common block repeat degree)
\label{D:Common block repeat degree}
Let $\mathbf{C}$ be a covering of a universe $U$. We define a function $\lambda_{\mathbf{C}}: U\times U\rightarrow N, \lambda_{\mathbf{C}}((x,y))=|\{K\in\mathbf{C}|\{x,y\}\subseteq K\}|$. We write $\lambda_{\mathbf{C}}((x,y))$ as  $\lambda_{\mathbf{C}}(x,y)$ for short, and for any $x,y\in U$, we call $\lambda_{\mathbf{C}}(x,y)$ the common block repeat degree of binary group $(x,y)$ with respect to covering $\mathbf{C}$. When the covering is clear, we omit the lowercase $\mathbf{C}$ for the function.
\end{definition}

That a binary group $(x,y)$ of universe $U$ has the common block repeat degree of $\lambda(x,y)$ with respect to covering $\mathbf{C}$ means there are $\lambda(x,y)$ blocks in covering $\mathbf{C}$ that contain element $x$ and $y$  simultaneously.

To illustrate the above definition, let us see an example.

\begin{example}
Let $U=\{1,2,3,4\}$, $\mathbf{C}=\{K_{1},K_{2},K_{3}\}$, where $K_{1}=\{1,2\}$, $K_{2}=\{2,3,4\}$, $K_{3}=\{3,4\}$. Then $\lambda(1,2)=\lambda(2,3)=\lambda(2,4)=1$, $\lambda(1,3)=\lambda(1,4)=0$, $\lambda(3,4)=2$.
\end{example}

The common block repeat degree $\lambda(x,y)$ has some properties as follows.

\begin{proposition}
(1) $\lambda(x,y)=\lambda(y,x)$; (2) $\lambda(x,y)\leq min(\partial(x),\partial(y))$.
\end{proposition}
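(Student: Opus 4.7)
The plan is to unpack both statements directly from Definitions~\ref{D:Membership repeat degree} and~\ref{D:Common block repeat degree}, since each part is essentially a one-step set-theoretic observation about the family $S(x,y) := \{K\in\mathbf{C}\mid \{x,y\}\subseteq K\}$ whose cardinality defines $\lambda(x,y)$.

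For part (1), I would note that as unordered pairs $\{x,y\}=\{y,x\}$, so the defining conditions $\{x,y\}\subseteq K$ and $\{y,x\}\subseteq K$ pick out exactly the same blocks of $\mathbf{C}$. Hence $S(x,y)=S(y,x)$, and taking cardinalities gives $\lambda(x,y)=\lambda(y,x)$.

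For part (2), I would observe that $\{x,y\}\subseteq K$ implies both $x\in K$ and $y\in K$ separately. Therefore
\[
S(x,y)\subseteq \{K\in\mathbf{C}\mid x\in K\}\quad\text{and}\quad S(x,y)\subseteq \{K\in\mathbf{C}\mid y\in K\}.
\]
Since $\mathbf{C}$ is finite, monotonicity of cardinality yields $\lambda(x,y)\le \partial(x)$ and $\lambda(x,y)\le \partial(y)$, and combining these gives $\lambda(x,y)\le \min(\partial(x),\partial(y))$.

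There is no real obstacle here; both claims are immediate from the definitions, and the only thing to be careful about is to phrase part (1) in terms of the unordered pair $\{x,y\}$ (as it appears in Definition~\ref{D:Common block repeat degree}) rather than the ordered pair $(x,y)$, so that the symmetry is transparent rather than requiring any additional argument.
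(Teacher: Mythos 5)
Your proof is correct and takes the same route as the paper, which simply states that the proposition follows easily from the two definitions; you have merely spelled out the set-theoretic details (symmetry of the unordered pair for (1), and the inclusion $S(x,y)\subseteq\{K\in\mathbf{C}\mid x\in K\}$ together with finiteness for (2)). No issues.
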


\begin{proof}
It follows easily from Definition~\ref{D:Membership repeat degree} and Definition~\ref{D:Common block repeat degree}.
\end{proof}

\begin{proposition}
\label{P:relationship between two types of repeat degree}
Let $\mathbf{C}$ be a covering of a universe $U$. For any $x,y\in U$,
$\{K\in\mathbf{C}|x\in K\}=\{K\in\mathbf{C}|\{x,y\}\subseteq K\}\Leftrightarrow\partial(x)=\lambda(x,y)$.
\end{proposition}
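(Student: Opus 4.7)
The plan is to unfold the two sets appearing in the equality and observe that one is always contained in the other, reducing the biconditional to the standard fact that a finite set and an equicardinal subset must coincide.

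First I would set $A = \{K\in\mathbf{C}\mid x\in K\}$ and $B = \{K\in\mathbf{C}\mid\{x,y\}\subseteq K\}$. Directly from Definition~\ref{D:Membership repeat degree} and Definition~\ref{D:Common block repeat degree}, $|A|=\partial(x)$ and $|B|=\lambda(x,y)$. The inclusion $B\subseteq A$ is immediate: any $K\in\mathbf{C}$ with $\{x,y\}\subseteq K$ in particular satisfies $x\in K$.

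For the forward implication, if $A=B$ then their cardinalities agree, giving $\partial(x)=\lambda(x,y)$. For the reverse implication, assume $\partial(x)=\lambda(x,y)$, i.e.\ $|A|=|B|$. Since $U$ is finite, $\mathbf{C}$ is a finite family, so both $A$ and $B$ are finite. Combined with $B\subseteq A$, equality of cardinalities forces $A=B$; otherwise $A\setminus B$ would be nonempty, contradicting $|A|=|B|$.

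There is no real obstacle here: the statement is essentially a repackaging of the definitions using the standard fact that a subset of a finite set with the same size is the whole set. The only point requiring a word of care is the finiteness of $\mathbf{C}$, but this is guaranteed by the blanket assumption in Section~\ref{S:Basic definitions} that $U$ is finite. Consequently the proof should be short and the author is likely to simply state that the conclusion follows from $B\subseteq A$ together with $|A|=|B|\Rightarrow A=B$ for finite sets.
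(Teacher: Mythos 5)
Your proof is correct and follows essentially the same route as the paper: both use the inclusion $\{K\in\mathbf{C}\mid\{x,y\}\subseteq K\}\subseteq\{K\in\mathbf{C}\mid x\in K\}$ together with the fact that a proper subset of a finite set has strictly smaller cardinality (the paper phrases the reverse direction as a proof by contradiction, but the content is identical). No gaps.
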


\begin{proof}

$(\Rightarrow)$: It is straightforward.\\
$(\Leftarrow)$: It is clear that $\{K\in\mathbf{C}|\{x,y\}\subseteq K\}\subseteq\{K\in\mathbf{C}|x\in K\}$. If $\{K\in\mathbf{C}|\{x,y\}\subseteq K\}\neq\{K\in\mathbf{C}|x\in K\}$, therefore  $\{K\in\mathbf{C}|\{x,y\}\subseteq K\}$ is the proper subset of $\{K\in\mathbf{C}|x\in K\}$. Taking into account the finiteness of set $\{K\in\mathbf{C}|x\in K\}$, we have $|\{K\in\mathbf{C}|\{x,y\}\subseteq K\}|<|\{K\in\mathbf{C}|x\in K\}|$, thus $\lambda(x,y)<\partial(x)$. This is a contradiction to that $\partial(x)=\lambda(x,y)$.

This completes the proof.
\end{proof}

\begin{definition}(Excluded number)
Let $\mathbf{C}$ be a covering of a universe $U$. For any $x,y\in U$, we call $f_{y}(x)=\partial(x)-\lambda(x,y)$ the $y$ excluded number of $x$.
\end{definition}

\begin{lemma}
\label{lemma}
Let $\mathbf{C}$ be a covering of a universe $U$. For any $x,y\in U$, $y\in N(x)$ if and only if $f_{y}(x)=0$.
\end{lemma}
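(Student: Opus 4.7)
The plan is to derive the lemma directly from Proposition~\ref{P:relationship between two types of repeat degree} together with the definition of $N(x)$ as an intersection. The key bridging observation is that $y \in N(x)$ means $y$ lies in every covering block that contains $x$, which is exactly the set-theoretic statement $\{K \in \mathbf{C} \mid x \in K\} = \{K \in \mathbf{C} \mid \{x,y\} \subseteq K\}$.

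First I would unfold the definition $N(x) = \cap\{K \in \mathbf{C} \mid x \in K\}$ to establish the equivalence
\[
y \in N(x) \iff \{K \in \mathbf{C} \mid x \in K\} = \{K \in \mathbf{C} \mid \{x,y\} \subseteq K\}.
\]
The forward direction uses that if $y$ belongs to the intersection, then every $K$ containing $x$ also contains $y$, which turns the obvious inclusion $\{K \mid \{x,y\} \subseteq K\} \subseteq \{K \mid x \in K\}$ into an equality. The backward direction is immediate: equality of these two families forces $y$ to appear in every block containing $x$, hence in their intersection $N(x)$.

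Next I would invoke Proposition~\ref{P:relationship between two types of repeat degree}, which states exactly that this set-equality is equivalent to $\partial(x) = \lambda(x,y)$. By the definition of the excluded number, $\partial(x) = \lambda(x,y)$ is the same as $f_y(x) = 0$. Chaining the two equivalences gives $y \in N(x) \iff f_y(x) = 0$.

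There is essentially no obstacle here; the lemma is a short bookkeeping step whose substance has already been packaged into Proposition~\ref{P:relationship between two types of repeat degree}. The only mild care needed is to make sure the forward direction of the first equivalence uses the finiteness of $U$ implicitly (so that intersections behave well and $N(x)$ is actually the set of elements common to finitely many blocks), but this is already assumed in the standing hypothesis that $U$ is finite.
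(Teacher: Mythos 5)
Your proposal is correct and follows essentially the same route as the paper: unfold $y\in N(x)$ into the statement that every block containing $x$ contains $y$, identify this with the set equality $\{K\in\mathbf{C}\mid x\in K\}=\{K\in\mathbf{C}\mid \{x,y\}\subseteq K\}$, and then apply Proposition~\ref{P:relationship between two types of repeat degree} and the definition of $f_{y}(x)$. (The only place finiteness actually matters is inside that proposition's cardinality argument, not in the handling of the intersection, but this does not affect the validity of your proof.)
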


\begin{proof}
According to Proposition~\ref{P:relationship between two types of repeat degree}, we have\\
$y\in N(x)\Leftrightarrow\forall K((K\in\mathbf{C}\wedge x\in K)\rightarrow(y\in K))\Leftrightarrow\forall K((K\in\mathbf{C}\wedge x\in K)\rightarrow(K\in\mathbf{C}\wedge \{x,y\}\subseteq K))\Leftrightarrow\forall K((K\in\mathbf{C}\wedge x\in K)\leftrightarrow(K\in\mathbf{C}\wedge \{x,y\}\subseteq K))\Leftrightarrow\forall K(K\in\mathbf{C}\wedge x\in K)\leftrightarrow\forall K(K\in\mathbf{C}\wedge \{x,y\}\subseteq K)\Leftrightarrow\{K\in\mathbf{C}|x\in K\}=\{K\in\mathbf{C}|\{x,y\}\subseteq K\}\Leftrightarrow\partial(x)=\lambda(x,y)\Leftrightarrow f_{y}(x)=0$.

This completes the proof.
\end{proof}

Now, we present a necessary and sufficient condition for neighborhoods to form a partition, the main theorem in this paper.

\begin{theorem}
Let $\mathbf{C}$ be a covering of a universe $U$, $Cov(\mathbf{C})$ forms a partition if and only if for any $x,y\in U$, $f_{y}(x)=f_{x}(y)=0$, or $f_{y}(x)\neq0$ and $f_{x}(y)\neq0$.
\end{theorem}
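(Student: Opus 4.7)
The plan is to translate the arithmetic condition on the excluded numbers into the language of neighborhoods via Lemma~\ref{lemma}, and then show the resulting symmetry condition is equivalent to the partition property using the earlier proposition about neighborhoods from Wang et al.

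First, I would recast the hypothesis. By Lemma~\ref{lemma}, $f_y(x)=0$ iff $y\in N(x)$, and $f_x(y)=0$ iff $x\in N(y)$. So the condition ``$f_y(x)=f_x(y)=0$, or $f_y(x)\neq 0$ and $f_x(y)\neq 0$'' is exactly the statement that for all $x,y\in U$,
\[
y\in N(x) \iff x\in N(y).
\]
That is, the relation $R$ on $U$ defined by $xRy \Leftrightarrow y\in N(x)$ is symmetric (it is already reflexive and, by the earlier proposition of Wang et al., transitive). Thus the entire theorem reduces to proving: $Cov(\mathbf{C})$ is a partition iff the neighborhood relation is symmetric.

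For the ``only if'' direction, I would assume $Cov(\mathbf{C})$ is a partition and take $y\in N(x)$. Since $y\in N(y)$ holds trivially, $y$ lies in both $N(x)$ and $N(y)$, so $N(x)\cap N(y)\neq\emptyset$. Because the members of a partition are either equal or disjoint, $N(x)=N(y)$, hence $x\in N(x)=N(y)$, giving $x\in N(y)$. The reverse implication is identical by the same argument with $x$ and $y$ swapped.

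For the ``if'' direction, I would assume the symmetry condition and pick any two blocks $N(x),N(y)\in Cov(\mathbf{C})$ with $N(x)\cap N(y)\neq\emptyset$; it suffices to show $N(x)=N(y)$. Choose $z\in N(x)\cap N(y)$. By the symmetry hypothesis, $z\in N(x)$ gives $x\in N(z)$, and $z\in N(y)$ gives $y\in N(z)$. Now I invoke the earlier proposition (Wang et al.): from $z\in N(x)$ and $x\in N(z)$ we get $N(x)=N(z)$, and from $z\in N(y)$ and $y\in N(z)$ we get $N(y)=N(z)$. Hence $N(x)=N(y)$, so $Cov(\mathbf{C})$ is a partition.

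There is no real obstacle here; the only subtlety is recognizing that the two-case disjunction in the theorem statement is just the contrapositive-symmetric formulation of ``$y\in N(x) \Leftrightarrow x\in N(y)$'' once Lemma~\ref{lemma} is applied. After that reformulation, both directions are short, with the ``if'' direction relying crucially on the previously stated Wang et al.\ proposition to upgrade mutual membership into equality of neighborhoods.
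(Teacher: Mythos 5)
Your proof is correct and takes essentially the same route as the paper's: both hinge on Lemma~\ref{lemma} to translate the excluded-number condition into the symmetry $y\in N(x)\Leftrightarrow x\in N(y)$, and on the cited proposition of Wang et al.\ to upgrade mutual membership into equality of neighborhoods. The only difference is presentational---you make the symmetry reformulation explicit and argue directly, whereas the paper runs both directions as proofs by contradiction.
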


\begin{proof}
$(\Leftarrow)$: We use an indirect proof. Suppose $Cov(\mathbf{C})$ is not a partition, then there exists at least one $x\in U$, such that $|\{K\in Cov(\mathbf{C})|x\in K\}|>1$. For it is clear that $x\in N(x)$, so we suppose there is another $N(y)\in Cov(\mathbf{C})$, such that $x\in N(y)$, where $y\neq x$, and $y\notin N(x)$, for if it is not so, we will obtain $N(x)=N(y)$. By $x\in N(y)$ and Lemma~\ref{lemma}, we have $f_{x}(y)=0$. By $y\notin N(x)$ and Lemma~\ref{lemma}, we have $f_{y}(x)\neq0$. This is a contradiction to the hypothesis.

$(\Rightarrow)$: We use an indirect proof. Suppose there are $x,y\in U$, such that $f_{y}(x)=0$, $f_{x}(y)\neq0$. By Lemma~\ref{lemma}, we have $y\in N(x)$, $x\notin N(y)$. Thus $N(x)\neq N(y)$, so there are two blocks $N(x)$ and $N(y)$ in  $Cov(\mathbf{C})$ that contain the element $y$, so $Cov(\mathbf{C})$ is not a partition. This is a contradiction to the hypothesis. Similarly, we obtain a contradiction to the hypothesis when $f_{y}(x)\neq0$ and $f_{x}(y)=0$.

This completes the whole proof.
\end{proof}

\section{Conclusions }
\label{S:Conclusions}
Neighborhood is an important concept in covering based rough sets, and through some concepts based on neighborhood and neighborhoods such as consistent function, we may find new connections between covering based rough sets and information systems, so it is necessary to study the properties of neighborhood and neighborhoods themselves. That under what condition neighborhoods form a partition is one of the fundamental issues induced by the two concepts. There are still many issues induced by neighborhood and neighborhoods to solve. We will continually focus on these issues in our following research.

\section*{Acknowledgments}
This work is supported in part by the National Natural Science Foundation of China under Grant No. 61170128, the Natural Science Foundation of Fujian Province, China, under Grant Nos. 2011J01374 and 2012J01294, and the Science and Technology Key Project of Fujian Province, China, under Grant No. 2012H0043.


\end{document}